\gdef\@copyrightpermission{
  \begin{minipage}{0.2\columnwidth}
   \href{https://creativecommons.org/licenses/by/4.0/}{\includegraphics[width=0.90\textwidth]{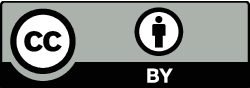}}
  \end{minipage}\hfill
  \begin{minipage}{0.8\columnwidth}
   \href{https://creativecommons.org/licenses/by/4.0/}{This work is licensed under a Creative Commons Attribution International 4.0 License.}
  \end{minipage}
  \vspace{5pt}
}
\newtheorem{prop}{Proposition}
\theoremstyle{definition}
\newtheorem{deff}{Definition}
\newtheorem{remark}{Remark}
\title[AAMAS-2025 Manuscript]{Value Iteration for Learning Concurrently Executable Robotic Control Tasks}
\author{Sheikh A. Tahmid}
\affiliation{
  \institution{University of Waterloo}
  \city{Waterloo}
  \country{Canada}}
\email{sheikh.abrar.tahmid@uwaterloo.ca}
\author{Gennaro Notomista}
\affiliation{
  \institution{University of Waterloo}
  \city{Waterloo}
  \country{Canada}}
\email{gennaro.notomista@uwaterloo.ca}
\begin{abstract}
Many modern robotic systems such as multi-robot systems and manipulators exhibit redundancy, a property owing to which they are capable of executing multiple tasks. This work proposes a novel method, based on the Reinforcement Learning (RL) paradigm, to train redundant robots to be able to execute multiple tasks concurrently. Our approach differs from typical multi-objective RL methods insofar as the learned tasks can be combined and executed in possibly time-varying prioritized stacks. We do so by first defining a notion of task independence between learned value functions. We then use our definition of task independence to propose a cost functional that encourages a policy, based on an approximated value function, to accomplish its control objective while minimally interfering with the execution of higher priority tasks. This allows us to train a set of control policies that can be executed simultaneously. We also introduce a version of fitted value iteration to learn to approximate our proposed cost functional efficiently. We demonstrate our approach on several scenarios and robotic systems.
\end{abstract}
\keywords{multi-objective RL; redundant robots; reinforcement learning; \newline optimization-based control}
\newcommand{\BibTeX}{\rm B\kern-.05em{\sc i\kern-.025em b}\kern-.08em\TeX}
\begin{document}


\pagestyle{fancy}
\fancyhead{}


\maketitle 


\section{Introduction}
Training a robot to be able to concurrently execute multiple control objectives is an active area of research in robotics and Reinforcement Learning (RL).
This is especially useful for robotic systems that have the capability of executing multiple tasks concurrently --- also known as redundancy --- such as multi-robot systems \cite{redundancyBook}, \cite{multiRobotRedundancy1}, \cite{multiRobotRedundancy2}, manipulators \cite{redundancyBook}, \cite{robotManipulatorsRedundancy1}, \cite{robotManipulatorsRedundancy2} and humanoids \cite{humanoidsRedundancy1}, \cite{humanoidsRedundancy2}.
Multi-objective RL \cite{multiObjSurveyNew}, \cite{multiObjSurveyOld}, which aims to train an agent to solve multiple possibly competing objectives, is a possible approach that has been applied to robotic control problems \cite{multiObjRlRobotics1}, \cite{multiObjRlRobotics2}.
While multi-objective RL methods may work, they generally result in polices that cannot execute subsets of the objectives or reprioritize tasks during online execution.\par
In this paper, we propose an alternative approach where we instead train multiple tasks using RL to be compatible with each other.
This allows us to combine possibly time-varying subsets of tasks which may be useful in many systems such as those operating over long durations where objectives are likely to change over time \cite{egerstedt2021robot}.
Furthermore, several works that study concurrently executing multiple tasks using redundant robots focus on executing tasks in a prioritized fashion \cite{jacobianPriorities}, \cite{robotManipulatorsRedundancy1}, \cite{esb}, \cite{gennaroEsbConfPaper}.
This is useful in robotics applications where one should, for instance, prioritize safety-critical tasks over others.
Our approach in this paper extends this idea as our method encourages each task's learned policy to not interfere with the execution of higher priority tasks.
Several works propose methods for combining multiple deep RL policies together \cite{todorovNips}, \cite{composableDeepRLIcra}, \cite{mcpNips2019}, \cite{composeIcml}, \cite{gennaroDars}.
We opt to make use of the work in \cite{gennaroDars} which combines multiple value functions implemented as neural networks---which may be learned through various methods within RL \cite{actorCriticSurvey}, \cite{cfvi}---into a single min-norm controller capable of concurrently executing each task in a prioritized fashion. 
The controller in \cite{gennaroDars} allows us to execute possibly time-varying subsets of tasks with possibly time-varying priorities.
The work in \cite{gennaroDars}, however, does not enforce nor encourage any constraints on the learned functions themselves, causing there to be no guarantees on how trained tasks execute together. 
In our work, we propose a particular cost functional that can be used to encourage learned value functions to satisfy a notion of independence, defined based on definitions of independence in \cite{jacobianTasks} and \cite{esb}, which allows the tasks to be executed concurrently.\par
We believe our proposed approach can be especially useful in applications that involve the coordinated control of multi-robot systems. In persistent environmental monitoring with multiple robots \cite{Ma2018}, \cite{gennaroRlEnvMonit}, for example, a team of robots must efficiently explore and monitor the given environment while maintaining battery levels. Being able to switch priorities or the set of tasks being executed would also be useful in such an application in order to respond to emergencies or monitor particular areas of interest. Multi-robot systems are, in general, not differentially flat \cite{CortesAndEgerstedt}, resulting in it being difficult to plan their behaviour using classical methods. As a result, RL has shown to be a useful tool in controlling multi-robot systems to perform complex tasks \cite{gennaroRlEnvMonit}.\par
The main contributions of this work are as follows:
\begin{itemize}
	\item We define notions of task \textit{independence} and \textit{orthogonality} for learned cost-to-go functions that characterize the ability to concurrently execute certain tasks together.
	\item We propose a form of a cost functional and show with a proof that a set of tasks can be made \textit{independent}, and therefore possible to execute together concurrently, by solving our proposed cost functional.
	\item We propose a version of fitted value iteration \cite{bertsekas} for the continuous control setting, similar to \cite{cfvi}, to directly learn our proposed cost functional within the deep RL paradigm.
	\item We demonstrate our approach by training control policies for robotic systems in such a way that the policies' respective tasks, which may have otherwise had conflicting objectives, can be simultaneously executed.
\end{itemize}
\section{Related Works}
This paper is concerned with training robotic control tasks within the RL paradigm, with the overall aim being to execute multiple tasks concurrently, which is one shared by existing multi-objective RL methods.
We define notions of independence and orthogonality between learned tasks, similar to definitions from existing work discussed in 2.3. 
We combine learned tasks using a min-norm controller proposed in \cite{gennaroDars}, which unlike other similar works, allows us to execute learned tasks in possibly time-varying prioritized stacks.

\subsection{RL and Fitted Value Iteration for Robotic Control Tasks}
Existing work in deep RL for continuous control tasks in robotics predominantly uses policy iteration methods such as \cite{gae}, \cite{ddpg}.
When the system dynamics are known, one alternative is to train a neural network to approximate an optimal cost-to-go function through fitted value iteration \cite{bertsekas}, 
where each iteration has the following two main steps:
\begin{equation} \tag{1}
\begin{aligned}
	J_{\textrm{tar}}^k (x_0) &= \min_{u_1,\ldots,u_l \in \mathcal{U}} \sum_{i = 0}^{l - 1} \gamma^i \cdot c(x_i, u_i) + \gamma^l J( x_l; \theta_k ), \\ 
	& \forall x_0 \in \mathcal{D} \\
\end{aligned}
\end{equation}
\begin{equation} \tag{2}
\begin{aligned}
	\theta_{k + 1} &= \min_{\theta} \frac{1}{| \mathcal{D} |} \sum_{x \in \mathcal{D}} { \lVert J(x; \theta) - J_{\textrm{tar}}^k (x) \rVert }^2 \\
\end{aligned}
\end{equation}
with state space $\mathcal{X} \subseteq \mathbb{R}^n$, input space $\mathcal{U} \subseteq \mathbb{R}^m$, network parameters $\theta \in \mathbb{R}^d$, instantaneous cost function $c(x, u) : \mathbb{R}^n \times \mathbb{R}^m \rightarrow \mathbb{R}_{\ge 0}$, a sequence of states $x_1, \ldots, x_l \in \mathcal{X}$ reached by selecting the sequence of inputs $u_1, \ldots, u_n \in \mathcal{U}$, discount factor $0 < \gamma \le 1$, number of lookahead steps $l \ge 1$ and dataset of states $\mathcal{D}$. 
To avoid biased estimates based on a particular choice of the number of look-ahead steps $l$, we can use the forward view of TD($\lambda$) \cite{suttonBarto}.
We assume the system to have deterministic dynamics.
Note that RL is typically posed as a maximization problem where a value function may be learned, but in this paper, we pose it as a minimization problem where the value function may also be referred to as a cost-to-go function. We will use the two terms interchangeably in this paper.
For robotic control tasks, the state and input space is continuous, meaning that in the general case, it is not clear how to find the inputs $u_1, \ldots, u_n \in \mathcal{U} \subseteq \mathbb{R}^m$ that minimize (1).
The work in \cite{cfvi} proposes the Continuous Fitted Value Iteration (CFVI) algorithm.
The work in \cite{cfvi}, which extends work in \cite{doya}, shows that the optimization problem in (1) has an analytical solution for the continuous control setting under certain assumptions.
In our work, we propose a similar version of fitted value iteration as CFVI for a similar problem setting with different assumptions on the instantaneous cost.\par
\subsection{Multi-Objective RL}
Multi-objective RL has been used to solve decision and control problems that involve optimizing for multiple objectives \cite{multiObjSurveyNew}, \cite{multiObjSurveyOld}.
It is possible to treat multi-objective RL as regular RL by using a linear combination of instantaneous rewards.
However, this has problems discussed in \cite{multiObjSurveyNew} such as the increased complexity of reward-shaping and the lack of explainability in resulting policies.
The work in \cite{multiObjRlRobotics1} and \cite{multiObjRlRobotics2}, which both find Pareto optimal policies, show that multi-objective RL is viable in robotic control problems.
However, multi-objective RL methods generally result in policies that are not separable nor capable of complying with time-varying task priorities, unlike our
proposed approach of training multiple tasks to be compatible with each other.\par
\subsection{Definitions of Control Tasks and Task Independence}
There exist previous work that mathematically define robotic control tasks, along with notions of \textit{independence} and \textit{orthogonality} which affect the ability to concurrently execute multiple tasks.
Several works such as \cite{oldJacobianBasedTasks}, \cite{jacobianPriorities} have studied executing multiple Jacobian-based tasks concurrently, often in a prioritized fashion, for redundant kinematic robots.
The work in \cite{jacobianTasks} analyzes the stability of such algorithms, also defining notions of independence and orthogonality between Jacobian-based tasks.
Set-based tasks were introduced in \cite{setBasedTasksOg} and the concurrent execution of multiple of them is analyzed in \cite{setBasedTasks}.
The work in \cite{esb} uses Extended-Set Based tasks \cite{gennaroEsbConfPaper} to generalize Jacobian-based tasks, as well as the definitions of independence and orthogonality introduced in \cite{jacobianTasks}.
The definitions in \cite{esb} can be used to describe the cost-to-go functions that we train in this paper.
However, these definitions do not take into account the system dynamics, unlike Definition $1$ in Section \uppercase\expandafter{\romannumeral3\relax} of this work which does.

\subsection{Composition of Learned Control Tasks}
Section 2.1 in \cite{gennaroDars} summarizes some of the previous literature on composing learned deep RL policies \cite{todorovNips}, \cite{composableDeepRLIcra}, \cite{mcpNips2019}, \cite{composeIcml}.
In our work, we opt to use the min-norm controller proposed in \cite{gennaroDars} as our method of combining multiple learned tasks
as it allows us to execute possibly time-varying subsets of tasks with possibly time-varying priorities.
We explain the controller in \cite{gennaroDars} further in the next section.
While \cite{gennaroDars} proposes a method of combining multiple learned tasks, it does not offer a way to ensure that the learned tasks are compatible with each other.
Our proposed method allows us to train tasks in such a way that it is possible to combine and execute them together using the method in \cite{gennaroDars}.

\section{Background and Problem Formulation}
We consider the problem of using RL to learn a set of tasks for a redundant robotic system in such a way that the tasks can be executed concurrently.\par
Firstly, we assume that the system is deterministic and control-affine.
These assumptions may not be applicable to all applications.
However, in robotics, it is common to assume that the dynamics are approximately known and many robotic systems, including most mobile robots and manipulators, can be modelled as control-affine systems.
The dynamics can be represented with: 
$$
\dot{x}(t) = f(x) + g(x) u \eqno{(3)}
$$
where $x \in \mathcal{X} \subseteq \mathbb{R}^n$ and $u \in \mathcal{U} \subseteq \mathbb{R}^m$ denote the state and input, respectively, and 
$f : \mathbb{R}^n \rightarrow \mathbb{R}^n$ and $g : \mathbb{R}^n \rightarrow \mathbb{R}^{n \times m}$ are continuous vector fields.\par
We assume that each task $1, \ldots, N$ is learned sequentially in the order of importance with task $1$ being the most important. 
This is a reasonable assumption in robotics as it is often desirable to prioritize more safety-critical tasks over others.
We assume that each task $i$ has an associated state cost, $q_i(x)$, that is positive semi-definite.
To learn each task $i$, we consider solving the following infinite horizon problem:
$$
J^{\star}_i(x(t)) = \min_{u(\cdot)} \int_t^{\infty} e^{- \beta \tau } \left( q_i(x) + u^{\top} R_i(x) u \right) d \tau \eqno{(4)}
$$
where $\beta \ge 0$ and $R_i : \mathbb{R}^n \rightarrow \mathbb{S}^{n}_{++} $ maps each state $x$ to a positive definite matrix. 
We assume that each task $i$ has a reachable terminal goal state $x_T \in \mathcal{X}$ where $q_i(x_T) = 0$.
As this problem cannot be solved analytically in the general case, we consider using deep RL to learn parametric approximations which we will denote as $\tilde{J}_i$ for each task $i$.
Note that the continuous dynamics of the system described in (3) can be discretized which would allow us to use an actor-critic \cite{actorCriticSurvey} method 
or version of fitted value iteration \cite{bertsekas}, described in (1) and (2), to find some $\tilde{J}_i$.\par
If we learn an approximation $\tilde{J}_i$ for task $i$, one way to execute the optimal policy is to use as input the $u^{\star}(.)$ that solves the optimization problem in (4). 
However, to simply make progress on task $i$, we can pick an input that drives $\tilde{J}_i$ toward $0$ or, in other words, makes the time derivative negative at a particular state.
To make progress on task $i$ at every state $x \in \mathcal{X}$ as fast as if we were executing the optimal policy, we can pick an input $u \in \mathcal{U}$ that makes the time derivative negative by the same amount as if we were explicitly using the optimal input $u^{\star}(x)$ at state $x$.
This motivates the use of the pointwise min-norm controller proposed in \cite{gennaroDars}.\par
Given a set of learned cost-to-go functions $\tilde{J}_1,\ldots,\tilde{J}_N$, each implemented with a neural network, we can select inputs that execute each task concurrently using the min-norm quadratic program proposed in \cite{gennaroDars}:
\begin{equation} \tag{5}
\begin{aligned}
	\min_{u \in \mathcal{U}, \delta \in \mathbb{R}^N} \quad & { \lVert u \rVert }^2 + \kappa { \lVert \delta \rVert }^2 \\
	\textrm{s.t.} \quad & L_f \tilde{J}_1 (x) + L_g \tilde{J}_1 (x) u \le - \sigma_1(x) + \delta_1 \\
	\quad & \vdots \\
	\quad & L_f \tilde{J}_N (x) + L_g \tilde{J}_N (x) u \le - \sigma_N(x) + \delta_N \\
	\quad & K \delta \ge 0
\end{aligned}
\end{equation}
Note that $L_f \tilde{J}_i (x)$ and $L_g \tilde{J}_i (x)$ are Lie derivatives where $L_f \tilde{J}_i (x) = \frac{ \partial \tilde{J}_i }{ \partial x} (x) f(x)$ and $L_g \tilde{J}_i (x) = \frac{ \partial \tilde{J}_i }{ \partial x} (x) g(x)$, meaning that the lefthand side of each constraint is simply the time derivative of the cost-to-go function for a task.
Since each $J^{\star}_1,\ldots,J^{\star}_N$ is positive semi-definite and smaller values represent being closer to a goal state, each corresponding $\tilde{J}_1,\ldots,\tilde{J}_N$ can be treated as a candidate
Control Lyapunov Function (CLF). Note that since $\tilde{J}_1,\ldots,\tilde{J}_N$ are approximated by neural networks, $L_f \tilde{J}_i (x)$ and $L_g \tilde{J}_i (x)$ can be calculated using back-propagation.
As mentioned previously, picking an input $u$ that makes the time derivative of a particular $\tilde{J}_i$ negative represents making progress on that task.\par
$\sigma_1(x), \ldots, \sigma_N(x)$ are positive semi-definite scalar functions, parameterized by the current state $x$, designed to recover inputs equivalent to the optimal input with respect to each approximate cost-to-go function.
\begin{remark}
Note that the expressions $\sigma_1(x), \ldots, \sigma_N(x)$ in \cite{gennaroDars} are derived from \cite{clfPrimbs1}, \cite{clfPrimbs2} assuming that the approximated cost functionals do not use a discount factor. 
However, it is still reasonable to use the controller in (5) with tasks trained with a slight discount factor, meaning $\gamma$ close to $1$ in the discrete case in (1) and $\beta$ close to $0$ in the equivalent continuous formulation in (4), as the recovered policies would still be close enough to optimal.
\end{remark}\par
Slack variables $\delta_1, \ldots, \delta_N \in \mathbb{R}$, which form $\delta \in \mathbb{R}^N$, are added to (5) to make the quadratic program feasible for every state $x \in \mathcal{X}$ as well as prioritize the tasks.
$K \in \mathbb{R}^{N \times N}$ is used to specify the possibly time-varying priorities between tasks by specifying the relationship between different values of $\delta_i$ for $i \in \{1, \ldots, N\}$.
Higher priority tasks are allowed less slack than lower priority ones, meaning that the controller will prioritize picking inputs close to optimal for higher priority tasks than lower priority ones.
$\kappa \ge 0$ is a hyperparamter used in the objective function in (5) to control the total amount of slack allowed.\par
If the quadratic program in (5) is used to execute our learned tasks, being able to execute $N$ tasks concurrently then depends on the relationships between  $L_g \tilde{J}_1 (x), \ldots, L_g \tilde{J}_N (x) $.
Specifically, if $(L_g \tilde{J}_1 (x))^{\top}, \ldots, (L_g \tilde{J}_N (x))^{\top} $ are linearly independent at some state $x \in \mathcal{X}$, then it is possible to pick an input $u \in \mathcal{U}$ at that state that drives each $\tilde{J}_1, \ldots, \tilde{J}_N$ toward $0$ and therefore make progress on each task, assuming that there are no additional constraints on the input.
\begin{deff}[Independent and Orthogonal Tasks]
	Consider a set of differentiable, positive semi-definite functions encoding tasks for a control affine system (1), of the form $V_i : \mathcal{X} \rightarrow \mathbb{R}_{\ge 0}$.
	We assume that at least one of $L_g V_1, \ldots, L_g V_N$ is non-zero $\forall x \in \mathcal{X}$.
	We also assume that $\forall i \in \{1, \ldots, N\}$, $L_g V_i (x) = 0$ only if $V_i(x) = 0$.
	$V_1, \ldots, V_N$ are \textit{independent} to each other at state $x$ if the nonzero vectors within $(L_g V_1 (x) )^{\top}, \ldots, (L_g V_N (x) )^{\top}$ are linearly independent.
	$V_1, \ldots, V_N$ are \textit{orthogonal} to each other at state $x$ if\newline
	$\langle (L_g V_i (x) )^{\top}, (L_g V_j (x) )^{\top} \rangle = 0 \, \forall i, j \in \{1, \ldots, N \}$.
	Note that if tasks are orthogonal at $x$, then they are also independent at $x$.
\end{deff}
\begin{remark}
	Assuming that at least one of $L_g V_1, \ldots, L_g V_N $ is non-zero is a reasonable assumption in our application as it is unlikely for the gradients of each of the trained neural networks to be perfectly zero.
\end{remark}\par
If a set of tasks are independent at every state $x \in \mathcal{X}$, then there is always a feasible solution $u \in \mathcal{U}$ to the problem in (5) that makes the time derivative of each $\tilde{J}_1,\ldots,\tilde{J}_N$
negative when the respective task has not already been completed, thus driving each $\tilde{J}_1, \ldots, \tilde{J}_N$ toward $0$, and therefore concurrently executing each task. 
In our chosen setting, the problem of training a robotic system to be able to execute multiple objectives deduces to training a set of learned tasks $\tilde{J}_1, \ldots, \tilde{J}_N$ such that
$\tilde{J}_1, \ldots, \tilde{J}_N$ are independent at every state $x \in \mathcal{X}$. 
In this paper, we solve this problem by designing a cost functional, that when approximated using an RL algorithm, results in learned tasks that satisfy our definition of independence.
This, along with a variant of the fitted value iteration algorithm to fit this proposed cost functional, are two of the main contributions of the paper and are the subject of the next section.

\section{Main Results}
In 4.1, we propose a version of the fitted value iteration algorithm for approximating cost functionals of the form in (4) for the continuous control setting.
We can use this algorithm to approximate a cost functional that we propose in 4.2. In 4.2, we prove, among other things, that our proposed cost functional can be used to define learned tasks to be independent to each other and thus possible to execute simultaneously.
\subsection{Continuous Fitted Value Iteration}
We propose a form of fitted value iteration for learning cost-to-go functions of the form in (4).
The main idea of the CFVI \cite{cfvi} algorithm is to extend the fitted value iteration algorithm by solving the optimization problem in (1) analytically for the continuous control setting. 
We extend this approach in Proposition 1 for cost functionals of the form in (4).
\par
\begin{prop} Given a control-affine system and the $k$-th iteration of an estimate of the cost-to-go function described in (4), $J^k$, the optimal policy
	based on the fitted value iteration update step in (1) in the continuous setting is:
	\begin{align*}
		u^{\star}(t) = - \frac{1}{2} {R(x)}^{-1} \left( L_g J^k (x) \right)^{\top} 
	\end{align*}
\end{prop}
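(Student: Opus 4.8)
The plan is to recover the continuous-time minimizer by taking the fitted value iteration step (1) with a single lookahead ($l=1$), substituting a first-order (Euler) discretization of the control-affine dynamics (3) and of the discounted running cost in (4) with time step $\Delta t$, and then letting $\Delta t \to 0$. Over one step we have $\gamma = e^{-\beta\Delta t}$, the next state $x_1 = x_0 + \Delta t\big(f(x_0)+g(x_0)u\big) + o(\Delta t)$, and the accumulated cost $c(x_0,u) = \Delta t\big(q(x_0) + u^\top R(x_0) u\big) + o(\Delta t)$, so that (1) reads
\begin{align*}
	J_{\textrm{tar}}^k(x_0) = \min_{u\in\mathcal{U}}\; \Delta t\big(q(x_0)+u^\top R(x_0)u\big) + e^{-\beta\Delta t}\,J^k\!\big(x_0+\Delta t(f(x_0)+g(x_0)u)\big) + o(\Delta t).
\end{align*}

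Next I would Taylor-expand: $J^k\big(x_0+\Delta t(f+gu)\big) = J^k(x_0) + \Delta t\big(L_f J^k(x_0) + L_g J^k(x_0)u\big) + o(\Delta t)$ and $e^{-\beta\Delta t} = 1-\beta\Delta t + o(\Delta t)$. Every term that does not depend on $u$ — namely $J^k(x_0)$ together with the $\Delta t$-order terms $\Delta t\big(q(x_0)+L_f J^k(x_0)-\beta J^k(x_0)\big)$ — can be pulled outside the $\min$ and does not affect the minimizer. Dividing the remaining $u$-dependent part by $\Delta t$ and letting $\Delta t\to 0$, the minimization over the input collapses to the pointwise problem
\begin{align*}
	\min_{u\in\mathcal{U}}\; u^\top R(x)u + L_g J^k(x)\,u ,
\end{align*}
which is precisely the input-dependent term of the associated Hamilton--Jacobi--Bellman equation; for $l>1$ (or the TD($\lambda$) forward view) the effective horizon $l\Delta t$ also shrinks to zero, so the same argument applies recursively and the extra inputs contribute only at higher order in $\Delta t$.

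It then remains to solve this minimization. Since $R(x)$ is positive definite, $u\mapsto u^\top R(x)u + L_g J^k(x)u$ is a strictly convex quadratic; assuming its unconstrained minimizer lies in (the interior of) $\mathcal{U}$, setting the gradient $2R(x)u + (L_g J^k(x))^\top$ to zero gives $u^\star(t) = -\tfrac12 R(x)^{-1}\big(L_g J^k(x)\big)^\top$, as claimed (consistent with the closed forms in \cite{doya}, \cite{cfvi}).

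The step I expect to be the main obstacle is making the limiting argument precise: discarding the $o(\Delta t)$ remainders uniformly and justifying the interchange of $\lim_{\Delta t\to0}$ with $\min_u$, which needs the already-assumed continuity/differentiability of $f$, $g$, $J^k$ together with a uniform bound on near-optimal inputs. The interior-minimizer assumption also warrants a remark, since $\mathcal{U}$ is only assumed to be a subset of $\mathbb{R}^m$; when $\mathcal{U}$ is a compact convex set the conclusion still holds with $u^\star$ replaced by the projection of $-\tfrac12 R(x)^{-1}(L_g J^k(x))^\top$ onto $\mathcal{U}$ in the $R(x)$-weighted norm.
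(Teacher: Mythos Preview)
Your proposal is correct and follows essentially the same route as the paper: one-step lookahead, Euler discretization of the control-affine dynamics, a first-order Taylor expansion of $J^k$, dropping the $u$-independent and higher-order terms as $\Delta t\to 0$, and then solving the remaining strictly convex quadratic $u^\top R(x)u + L_g J^k(x)\,u$. The paper's argument is slightly more heuristic on the discount factor (it simply notes that $\gamma\to 1$ in the continuous-time limit by analogy with \cite{cfvi}) and does not discuss the $l>1$ case or the interior-minimizer caveat you raise, so your version is, if anything, a touch more careful.
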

\begin{proof}
	This proof is very similar to Theorem $1$ in \cite{cfvi}. 
	We first define discretized dynamics for the system by using a $\Delta t$ that is small enough to appproximate the dynamics in (3).
	We define $\bar{f}(x, u)$ such that:
	$$
		\bar{f}(x, u) = x + (f(x) + g(x) u) \Delta t
	$$
	where $\bar{f}(x,u)$ calculates the new state after sending input $u \in \mathcal{U}$ at state $x \in \mathcal{X}$ for a duration of $\Delta t$.
	We then substitute $\bar{f}(x, u)$ and the instantaneous cost into the fitted value iteration update step in (1) with $l = 1$:
	$$
	J_{\textrm{tar}}^k (x) = \min_{u \in \mathcal{U}} \left\{ \left( q(x) + u^{\top} R(x) u \right) \cdot \Delta t + \gamma J^k(\bar{f}(x, u)) \right\}.
	$$
	We can then rewrite this using the Taylor approximation of $J^k$ around $x$: 
	\begin{align*}
	J_{\textrm{tar}}^k (x) &= \min_{u \in \mathcal{U}} \left\{ \left( q(x) + u^{\top} R(x) u \right) \Delta t \right. \\
	& \left. + \, \gamma \left( J^k(x) + \left( \frac{\partial J^k}{\partial x} \right)^{\top} \bar{f}(x, u) \Delta t + \mathcal{O}(\Delta t, x, u) \right) \right\}
	\end{align*}
	where $\mathcal{O}(\Delta t, x, u)$ represents the higher order terms.
	In the continuous setting, $\Delta t$ approaches $0$, which means that the higher order terms, $\mathcal{O}(\Delta t, x, u)$, also approach $0$. 
	By the same logic in \cite{cfvi}, $\gamma$ also approaches $1$ at the continuous time limit as $\Delta t$ approaches $0$.
	To find the optimal input, we can then simplify, removing terms that do not rely on $u$, removing $\mathcal{O}(\Delta t, x, u)$ and substituting $\gamma = 1$ to get the following unconstrained optimization problem:
	\begin{align*}
		\min \left\{ u^{\top} R(x) u + L_g J^k (x) u \right\}
	\end{align*}
	for which the solution is $ u^{\star} = - \frac{1}{2} { R(x) }^{-1} \left( L_g J^k (x) \right)^{\top} $. Note that ${R(x)}^{-1}$ is defined since $R(x)$ is positive definite.
\end{proof}\par
We can then use this analytical solution for the optimal policy with respect to $J^k$ to perform the fitted value iteration steps shown in (1) and (2). 
To avoid biased estimates for a particular choice of $l$, we use the forward view of TD($\lambda$) \cite{suttonBarto}.\par
This version of fitted value iteration is very similar to CFVI \cite{cfvi}. 
However, instead of assuming that the input cost is separable from the state and strictly convex, we instead assume that it is quadratic with respect to some function that maps the current state to a positive definite matrix. 
This differing assumption allows us to propose cost functionals, of a form different than those that the CFVI algorithm is able to learn, for training tasks that satisfy our definition of independence.
\subsection{Cost Functionals for Independence}
Consider the following cost functional:
$$
\min_{u(\cdot)} \int_t^{\infty} e^{- \beta \tau } \left( q_{N + 1}(x) + { \lVert u \rVert }^2 + \sum_{i = 1}^N ( L_g \tilde{J}_i (x) u)^2 \lambda_i \right) d \tau \eqno{(6)}
$$
\noindent where $\lambda_1,\ldots,\lambda_N > 0$, $\beta \ge 0$ and $\tilde{J}_1, \ldots, \tilde{J}_N$ are a set of $N$ learned tasks that we wish to train a new task to be independent to. 
It is clear that (6) is an instance of (4), as ${ \lVert u \rVert }^2 + \sum_{i = 1}^N ( L_g V_i (x) u)^2 \lambda_i$
can be rearranged as 
$$
u^{\top} \left( I + \sum_{i = 1}^N ( L_g \tilde{J}_i (x) )^{\top} ( L_g \tilde{J}_i (x) ) \lambda_i \right) u
$$
and $I + \sum_{i = 1}^N ( L_g \tilde{J}_i (x) )^{\top} ( L_g \tilde{J}_i (x) ) \lambda_i$
is positive definite. This means that the method of value iteration from the previous subsection can be used to approximate the cost functional proposed in (6). 
$\lambda_1,\ldots,\lambda_N$ are hyperparameters that determine how much to penalize inputs which interfere with previously trained, higher priority tasks.
Through Proposition 2, we show that choosing a large enough $\lambda_1, \ldots, \lambda_N$ would result in a newly trained task, that fits our proposed cost functional, to be independent to $\tilde{J}_1, \ldots \tilde{J}_N$.\par
\begin{prop}
	There exist scalars $\lambda_1, \ldots, \lambda_N > 0$, such that if $\tilde{J}_{N + 1}$ fits a cost functional of the form in (6) that uses $\lambda_1, \ldots, \lambda_N$, 
	then for every state $x$ where $\tilde{J}_1, \ldots, \tilde{J}_N$ are independent, $\tilde{J}_1, \ldots, \tilde{J}_N, \tilde{J}_{N + 1}$ are also independent at $x$.
\end{prop}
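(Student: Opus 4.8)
The plan is to combine Proposition~1 with an argument by contradiction: a sufficiently large choice of $\lambda_1,\ldots,\lambda_N$ forces the optimal policy for (6) to use vanishing control effort along the directions $(L_g\tilde{J}_1(x))^\top,\ldots,(L_g\tilde{J}_N(x))^\top$, so if $(L_g\tilde{J}_{N+1}(x))^\top$ belonged to their span the new task could not be driven toward its goal, contradicting that $\tilde{J}_{N+1}$ fits a functional of the form (4) with a reachable goal.

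First I would record the structure. Setting $R(x)=I+\sum_{i=1}^N\lambda_i\,(L_g\tilde{J}_i(x))^\top(L_g\tilde{J}_i(x))\succ 0$, (6) is an instance of (4), so Proposition~1 gives the optimal policy $u^\star(x)=-\frac12 R(x)^{-1}(L_g\tilde{J}_{N+1}(x))^\top$, and at the fixed point of the value iteration $\tilde{J}_{N+1}$ satisfies the stationary Hamilton--Jacobi--Bellman equation of (4), $\beta\tilde{J}_{N+1}(x)=q_{N+1}(x)+L_f\tilde{J}_{N+1}(x)-\frac14(L_g\tilde{J}_{N+1}(x))R(x)^{-1}(L_g\tilde{J}_{N+1}(x))^\top$; consequently the time derivative of $\tilde{J}_{N+1}$ along the induced closed loop equals $\beta\tilde{J}_{N+1}(x)-q_{N+1}(x)-\frac14(L_g\tilde{J}_{N+1}(x))R(x)^{-1}(L_g\tilde{J}_{N+1}(x))^\top$.

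Next I would reduce the statement to a pointwise linear-algebra claim. Fix a state $x$ at which $\tilde{J}_1,\ldots,\tilde{J}_N$ are independent and put $a=(L_g\tilde{J}_{N+1}(x))^\top$, $b_i=(L_g\tilde{J}_i(x))^\top$, $S=\{i:b_i\neq 0\}$ and $P=\mathrm{span}\{b_i:i\in S\}$. If $a=0$ the nonzero members of $\{b_1,\ldots,b_N,a\}$ are those of $\{b_1,\ldots,b_N\}$ and we are done; if $a\neq 0$ then $\nabla\tilde{J}_{N+1}(x)\neq 0$, so positive semi-definiteness of $\tilde{J}_{N+1}$ forces $\tilde{J}_{N+1}(x)>0$ (a zero of a non-negative differentiable function is a critical point), and independence of all $N+1$ tasks at $x$ is precisely the condition $a\notin P$. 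Thus it suffices to rule out $a\in P$. Since $\{b_i:i\in S\}$ is a basis of $P$, the matrix $\sum_{i\in S}b_ib_i^\top$ is positive definite on the $R(x)$-invariant subspace $P$, so $R(x)|_P\succeq\bigl(1+c(x)\min_i\lambda_i\bigr)I_P$ for some $c(x)>0$; hence, if $a\in P$, then $(L_g\tilde{J}_{N+1}(x))R(x)^{-1}(L_g\tilde{J}_{N+1}(x))^\top\le\|a\|^2/\bigl(1+c(x)\min_i\lambda_i\bigr)\to 0$ as $\min_i\lambda_i\to\infty$. Substituting into the closed-loop derivative above, for $\min_i\lambda_i$ large enough the optimal policy can no longer decrease $\tilde{J}_{N+1}$ at the non-goal state $x$, so the optimal trajectory from $x$ never reaches the goal where $q_{N+1}$ vanishes --- incompatible with $\tilde{J}_{N+1}$ being the finite cost-to-go of (6). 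A continuity and compactness argument over $\mathcal{X}$ then turns the per-state threshold on $\min_i\lambda_i$ into a single finite tuple $\lambda_1,\ldots,\lambda_N$ that works at every state where $\tilde{J}_1,\ldots,\tilde{J}_N$ are independent, which is the claim.

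The step I expect to be the main obstacle is controlling $\tilde{J}_{N+1}$ as the weights grow: the HJB equation carries $q_{N+1}(x)+L_f\tilde{J}_{N+1}(x)-\beta\tilde{J}_{N+1}(x)$ on its right-hand side, and to conclude that this cannot balance the vanishing quadratic term one needs $\tilde{J}_{N+1}$ and $\nabla\tilde{J}_{N+1}$ bounded uniformly in $\min_i\lambda_i$. I would secure this either through an explicit a priori bound on the cost-to-go --- built from a reference policy that steers $x$ off the bad set $\{a\in P\}$ before accumulating large penalty cost --- or by rephrasing the contradiction as ``the cost-to-go from $x$ diverges'', which only uses $q_{N+1}\ge 0$ and the penalty structure. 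Related care is needed to obtain a uniform positive lower bound for $c(x)$ over the possibly non-closed independence region (where compactness of $\mathcal{X}$ enters) and to treat the degenerate case $\beta=0$ separately.
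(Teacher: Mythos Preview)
Your route is genuinely different from the paper's and noticeably heavier. The paper never touches the HJB closed-loop derivative or any reachability argument; it stays purely linear-algebraic. Its contradiction runs as follows: for large $\lambda_i$ the penalty in (6) acts as the hard constraint $\sum_i (L_g\tilde J_i(x)\,u)^2=0$, so the optimal input $u^\star=-\tfrac12 R(x)^{-1}a$ satisfies $b_i^\top R(x)^{-1}a=0$ for each $i$. If in addition $a=\sum_i c_i b_i$, multiply by $c_i$ and sum to obtain $a^\top R(x)^{-1}a=0$; since $R(x)^{-1}\succ0$ for every finite choice of the weights, this forces $a=0$, which (together with the case $a=0$ you already handled) gives independence directly. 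In other words, once you have the penalty-induced orthogonality $b_i^\top u^\star=0$, the hypothesis $a\in P$ delivers $a^\top R^{-1}a=0$ \emph{exactly}, not merely ``small'', and positive definiteness finishes the job without any dynamics.

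Your detour through the time derivative $\beta\tilde J_{N+1}-q_{N+1}-\tfrac14 a^\top R^{-1}a$ introduces a real gap beyond the uniform-in-$\lambda$ bounds you already flag: even if the quadratic term vanishes, the sign of $\beta\tilde J_{N+1}(x)-q_{N+1}(x)$ is not controlled, so you cannot conclude $\dot{\tilde J}_{N+1}(x)\ge 0$ at the bad state, and when $\beta>0$ the discounted cost-to-go is finite regardless of whether the goal is reached, so ``the cost-to-go diverges'' does not yield a contradiction either. The paper's algebraic shortcut sidesteps all of this; you recover it by replacing your eigenvalue estimate $a^\top R^{-1}a\le\|a\|^2/(1+c(x)\min_i\lambda_i)$ with the identity $a^\top R^{-1}a=\sum_i c_i\,b_i^\top R^{-1}a$ and the penalty-limit orthogonality. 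It is fair to note, though, that the paper's proof is itself heuristic at the penalty-to-constraint step: it asserts the limiting orthogonality without tracking how $\tilde J_{N+1}$ --- and hence $a$ --- varies with the $\lambda_i$, so it does not actually exhibit a finite tuple $(\lambda_1,\ldots,\lambda_N)$ either; your concerns about uniformity are legitimate, they simply apply to both arguments.
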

\begin{proof}
	If our deep RL algorithm for training the function $\tilde{J}_{N + 1}$ converges such that it solves the HJB equation for the cost functional in (6), then the optimal input, based on
	Proposition 1 as well as the work in \cite{doya}, minimizes the following unconstrained optimization problem:
	\begin{align*}
		\min_{u} \left\{ { \lVert u \rVert }^2 + \sum_{i = 1}^N (L_g \tilde{J}_i (x) u)^2 \lambda_i + L_g \tilde{J}_{N + 1} (x) u \right\}
	\end{align*}
	The solution to this is $u^{\star} = - \frac{1}{2} {R(x)}^{-1} ( L_g \tilde{J}_{N + 1} (x) )^{\top} $ where $R(x) = I + \sum_{i = 1}^N ( L_g \tilde{J}_i (x) )^{\top} L_g \tilde{J}_i (x) $.
	If $\lambda_1, \ldots, \lambda_N$ are large enough, $u^{\star}$ approximates a solution to the following
	constrained optimization problem:
		\begin{align*}
			&\min_{u} \left\{ { \lVert u \rVert }^2 + L_g \tilde{J}_{N + 1} (x) u \right\} \\
			&\text{s.t.} \sum_{i = 1}^N ( L_g \tilde{J}_i (x) u )^2 = 0
		\end{align*}
	which means that $\langle L_g \tilde{J}_i (x), - \frac{1}{2} { R(x) }^{-1} ( L_g \tilde{J}_{N + 1} (x) )^{\top} \rangle = 0$ for all $i \in \{ 1, \ldots, N \}$. 
	For the sake of a contradiction, assume that for some $x \in \mathcal{X}$, $\tilde{J}_1,\ldots,\tilde{J}_N$ are independent at $x$ and 
	$\tilde{J}_1,\ldots,\tilde{J}_N, \tilde{J}_{N + 1}$ are not independent at $x$ meaning that $L_g \tilde{J}_{N + 1} (x) $ can be written as a linear combination of 
	$L_g \tilde{J}_1 (x), \ldots, L_g \tilde{J}_N (x)$.
	Let $c_1, \ldots, c_N \in \mathbb{R}$ be the multipliers for this linear combination.
	We can see that 
	$$\langle c_i (L_g \tilde{J}_i (x) )^{\top}, - \frac{1}{2} { R(x) }^{-1} (L_g \tilde{J}_{N + 1} (x) )^{\top} \rangle = 0 \,, 1 \le i \le N.$$
	By summing each inner product together, we can then see that $ \langle ( L_g \tilde{J}_{N + 1} (x) )^{\top}, {R(x)}^{-1} ( L_g \tilde{J}_{N + 1} (x) )^{\top} \rangle = 0 $.
	$R(x)$ is positive definite, so its inverse must also be positive definite. This means that $(L_g \tilde{J}_{N + 1} (x) )^{\top}$ must be $0$.
	However, we assumed $\tilde{J}_1, \ldots, \tilde{J}_N$ to be independent, so a linear combination of $(L_g \tilde{J}_1 (x) )^{\top}, \ldots, (L_g \tilde{J}_N (x) )^{\top}$ cannot be $0$, so this
	is a contradiction.  
	Therefore, if $\tilde{J}_1, \ldots, \tilde{J}_N$ are independent at $x$, then $\tilde{J}_1, \ldots, \tilde{J}_N, \tilde{J}_{N + 1}$ are also independent at $x$.
\end{proof}\par

Proposition 2 suggests that we can train a set of $N$ tasks to be independent by training them in order using the cost functional in (6) with large enough choices of $\lambda_1, \ldots, \lambda_N$.\par
In some cases, it is possible for a resulting learned task $\tilde{J}_{N + 1}$ to not only be independent to previously trained tasks
$\tilde{J}_1, \ldots, \tilde{J}_N$, but to also be orthogonal. We show in Proposition 3 that if $\tilde{J}_1, \ldots, \tilde{J}_N$ are independent, then the optimal policy to execute task $\tilde{J}_{N + 1}$ is $- \frac{1}{2} (L_g \tilde{J}_{N +1} (x) )^{\top}$ if and only if $\tilde{J}_{N + 1}$ is orthogonal to each of $\tilde{J}_1, \ldots, \tilde{J}_N$.\par

\begin{prop}
	Assume that a function $\tilde{J}_{N + 1}$ solves the HJB equation and is defined based on the cost functional in (6). Without loss of generality, let $\lambda_1 = \cdots = \lambda_N = 1$.
	We assume as before that at least one of
	$L_g \tilde{J}_1(x), \ldots, L_g \tilde{J}_N(x), L_g \tilde{J}_{N + 1}(x)$ is non-zero $\forall x \in \mathcal{X}$.
	At a state $x$, if $J_1, \ldots, J_N$ in the cost functional of $\tilde{J}_{N + 1}$ are independent, 
	then $\tilde{J}_{N + 1}$ is orthogonal to each of $J_i \in \{J_1, \ldots, J_N \}$ at $x$ if and only if
	the optimal input at $x$ with respect to $\tilde{J}_{N + 1}$ is equal to $- \frac{1}{2} ( L_g \tilde{J}_{N + 1} (x) )^{\top}$. 
\end{prop}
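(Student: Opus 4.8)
The plan is to reduce the statement to the closed form of the optimal input already derived in the proof of Proposition~2. Since $\tilde{J}_{N+1}$ solves the HJB equation for the cost functional (6) with $\lambda_1 = \cdots = \lambda_N = 1$, that proof (together with Proposition~1) gives the optimal input at $x$ as
\begin{align*}
u^{\star}(x) = -\frac{1}{2} R(x)^{-1} \left( L_g \tilde{J}_{N+1}(x) \right)^{\top}, \qquad R(x) = I + \sum_{i=1}^{N} \left( L_g \tilde{J}_i(x) \right)^{\top} L_g \tilde{J}_i(x).
\end{align*}
Abbreviate $b := \left( L_g \tilde{J}_{N+1}(x) \right)^{\top}$ and $a_i := \left( L_g \tilde{J}_i(x) \right)^{\top}$, so that $R(x) = I + \sum_{i=1}^N a_i a_i^{\top}$ and $u^{\star}(x) = -\frac{1}{2} R(x)^{-1} b$. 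Because $R(x)$ is positive definite, hence invertible, the condition $u^{\star}(x) = -\frac{1}{2} b$ is equivalent to $R(x)^{-1} b = b$, and multiplying through by $R(x)$ this is equivalent to $R(x) b = b$, i.e.\ to the identity $\sum_{i=1}^N \langle a_i, b \rangle\, a_i = 0$. It therefore suffices to prove that this identity holds if and only if $\langle a_i, b \rangle = 0$ for every $i \in \{1, \ldots, N\}$, which is exactly the statement that $\tilde{J}_{N+1}$ is orthogonal to each of $\tilde{J}_1, \ldots, \tilde{J}_N$ at $x$.

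The ``if'' direction is immediate: if $\langle a_i, b \rangle = 0$ for all $i$ then each term of $\sum_{i=1}^N \langle a_i, b \rangle\, a_i$ vanishes. For the ``only if'' direction, assume $\sum_{i=1}^N \langle a_i, b \rangle\, a_i = 0$; this is a vanishing linear combination of $a_1, \ldots, a_N$ with scalar coefficients $\langle a_i, b \rangle$. Any index $i$ with $a_i = 0$ trivially satisfies $\langle a_i, b \rangle = 0$. For the remaining indices the $a_i$ are precisely the nonzero vectors among $\left( L_g \tilde{J}_1(x) \right)^{\top}, \ldots, \left( L_g \tilde{J}_N(x) \right)^{\top}$, which are linearly independent by the hypothesis that $\tilde{J}_1, \ldots, \tilde{J}_N$ are independent at $x$ (Definition~1); hence each coefficient $\langle a_i, b \rangle$ must vanish as well. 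Combining the two cases gives $\langle \left( L_g \tilde{J}_i(x) \right)^{\top}, \left( L_g \tilde{J}_{N+1}(x) \right)^{\top} \rangle = 0$ for all $i$, which is orthogonality.

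There is no genuinely difficult step; the argument is a short computation once the closed form of $u^{\star}$ is available. The only places that need care are (i) the equivalence $R(x)^{-1} b = b \iff R(x) b = b$, which relies on $R(x) = I + \sum_i a_i a_i^{\top}$ being invertible, and (ii) the bookkeeping around Definition~1, which only asserts linear independence of the \emph{nonzero} $L_g \tilde{J}_i(x)$ vectors, so the degenerate indices with $a_i = 0$ --- and the trivial case $b = 0$, in which both the stated form of $u^{\star}$ and orthogonality hold automatically --- must be handled separately. I would also remark that this characterizes orthogonality at $x$ as exactly the situation in which the interference penalties in (6) are inactive at $x$, so that the policy recovered for $\tilde{J}_{N+1}$ coincides with the min-norm policy one would use for it in isolation.
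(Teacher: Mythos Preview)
Your argument is correct and follows essentially the same route as the paper: both reduce the question to the first-order optimality condition for the quadratic in $u$, obtain the identity $\sum_{i=1}^N \langle a_i, b\rangle\, a_i = 0$, and then invoke the independence hypothesis to conclude that each inner product vanishes. The only cosmetic difference is that the paper handles the forward direction by observing that the candidate input nulls the penalty terms and simultaneously minimizes the remaining $\lVert u\rVert^2 + L_g\tilde{J}_{N+1}(x)u$, whereas you treat both directions uniformly via the closed form $u^\star = -\tfrac{1}{2}R(x)^{-1}b$; your version is slightly more streamlined and is also more explicit about the degenerate cases $a_i = 0$ and $b = 0$.
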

\begin{proof}
	\begin{enumerate}
	\item[($\Rightarrow$)]
		Assume that $\tilde{J}_{N + 1}$ is orthogonal to each of $\tilde{J}_1, \ldots, \tilde{J}_N$ at $x$. As shown in the work in \cite{doya} and used similarly in Proposition 2, the optimal input with respect to $\tilde{J}_{N + 1}$ is an input that
		minimizes the following optimization problem:
		\begin{align*}
			\min_{u} \left\{ { \lVert u \rVert }^2 + \sum_{i = 1}^N ( L_g \tilde{J}_i (x) u )^2 +  L_g \tilde{J}_{N + 1} (x) u \right\}
		\end{align*}
		We can see that if $\tilde{J}_{N + 1}$ is orthogonal to each of $\tilde{J}_1, \ldots, \tilde{J}_N$ at $x$, then 
		\begin{align*}
			\sum_{i = 1}^N \left( L_g \tilde{J}_i (x) \left( - \frac{1}{2} ( L_g \tilde{J}_{N + 1} )^{\top} (x) \right) \right)^2 &= 0
		\end{align*}
			We also know that $- \frac{1}{2} ( L_g \tilde{J}_{N + 1} (x) )^{\top} $ minimizes the function ${ \lVert u \rVert }^2 + L_g \tilde{J}_{N + 1} (x) u$. Therefore, if $\tilde{J}_{N + 1}$ is orthogonal to each of $\tilde{J}_1, \ldots, \tilde{J}_N$ at $x$, then the optimal input at $x$ is $- \frac{1}{2} ( L_g \tilde{J}_{N + 1} (x) )^{\top}$. 
		\item[($\Leftarrow$)] Assume that $u^{\star} = - \frac{1}{2} ( L_g \tilde{J}_{N + 1} (x) )^{\top} $ is the optimal input. We know that the optimal input must solve the following optimization problem:
		\begin{align*}
			\min_{u} \left\{ { \lVert u \rVert }^2 + \sum_{i = 1}^N ( L_g \tilde{J}_i (x) u )^2 + L_g \tilde{J}_{N + 1} (x) u \right\}
		\end{align*}
		which means it must also solve the following equation when we take the gradient and set it equal to $0$:
		\begin{align*}
			2 u + 2 g(x)^{\top} \left( \sum_{i = 1}^N \left( \frac{\partial \tilde{J}_i}{\partial x} \right)^{\top} \frac{\partial \tilde{J}_i}{\partial x} \right) g(x) u + (L_g \tilde{J}_{N + 1} (x))^{\top} &= 0
		\end{align*}
		Substituting in $u^{\star} = - \frac{1}{2} ( L_g \tilde{J}_{N + 1} (x) )^{\top}$, we get:
		\begin{align*}
			2 g(x)^{\top} \left( \sum_{i = 1}^N \left( \frac{\partial \tilde{J}_i}{\partial x} \right)^{\top} \frac{\partial \tilde{J}_i}{\partial x} \right) g(x) u &= 0 \\
			\sum_{i = 1}^N (L_g \tilde{J}_i (x))^{\top} \langle ( L_g \tilde{J}_i (x) )^{\top}, ( L_g \tilde{J}_{N + 1} (x) )^{\top} \rangle &= 0
		\end{align*}
			Since we assumed that each of $\tilde{J}_1, \ldots, \tilde{J}_N$ are independent at $x$, we cannot find a set of non-zero constants that make this sum to $0$, so it must be the case that\newline $\langle (L_g \tilde{J}_i (x) )^{\top}, (L_g \tilde{J}_{N + 1} (x) )^{\top} \rangle = 0$ for each $i \in \{1, \ldots, N\}$. 
	\end{enumerate}
\end{proof}
If a task, $\tilde{J}_{N + 1}$, can be trained such that it is orthogonal to a set of other independent tasks, $\tilde{J}_1, \ldots ,\tilde{J}_N$, then not only is it possible to execute each task concurrently, it is also possible to execute the optimal input for $\tilde{J}_{N + 1}$ without having to calculate the gradients for tasks that $\tilde{J}_{N + 1}$ was trained to be independent to. If $\tilde{J}_{N + 1}$ was trained without a discount factor, then the min-norm controller in \cite{gennaroDars} would recover the exact optimal input for $\tilde{J}_{N + 1}$ without interfering with the execution of higher priority tasks.\par
	To summarize this section, we propose a cost functional in (6) that we prove in Proposition 2 can help us define tasks that are independent, and thus possible to execute together simultaneously. With Proposition 1, we propose a version of fitted value iteration \cite{bertsekas} that extends the continuous fitted value iteration algorithm \cite{cfvi} to fit this cost functional efficiently. Our method of fitted value iteration, similarly to \cite{cfvi}, can be accelerated vastly using GPUs, as calculating the optimal inputs with respect to the gradients, advancing the states to generate new value function estimates and updating the parameters of the neural network can be done in large batches using GPUs. Additionally, we show in Proposition 3 the relationship between the optimal input of a learned task and whether or not it is orthogonal to the set of tasks it was trained to be independent to.  \par
	In the next section, we demonstrate our proposed fitted value iteration algorithm and implications of Proposition $2$ in several scenarios, training robotic systems to be able to concurrently execute tasks that may have otherwise had conflicting objectives.



\section{Simulations}
In this section, we demonstrate our proposed method on several scenarios involving mobile robots. 
We show that we are able to use our approach to train cost-to-go functions, that may have had otherwise conflicting objectives, such that they are possible to combine and execute together concurrently. 
We compare our approach against simply learning the separate tasks using CFVI \cite{cfvi}.
\subsection{Go-to-Point and Avoid Square Region}
We first demonstrate our proposed method by training a mobile robot to move to a point while avoiding a square-shaped region.
The state space $\mathcal{X} = \mathbb{R}^2$ is the robot position and the input space $\mathcal{U} = \mathbb{R}^2$ is the velocity.\par
We first train the task of avoiding the region, using an instantaneous cost of $q_1(x) + { \lVert u \rVert }^2$ where $q_1(x) = 60$ when the robot is within the region and $0$ otherwise. 
We use CFVI to train a cost-to-go function, $\tilde{J}_1$, plotted in Fig. 1a.\par
Next, we train two versions of the task of moving the robot to the point ${\begin{bmatrix} -2 & 0 \end{bmatrix}}^{\top}$. 
	The first version is trained using CFVI with an instantaneous cost of $q_2(x) + { \lVert u \rVert }^2$, while the second version is trained, using our proposed method, with an instantaneous cost of $q_2(x) + { \lVert u \rVert }^2 + 10^4 ( L_g \tilde{J}_1 (x) u )^2$, where $q_2(x)$ is the distance from the current position $x \in \mathcal{X}$ to ${\begin{bmatrix} -2 & 0 \end{bmatrix}}^{\top}$ multiplied by $5$.
We will refer to the resulting functions, both plotted in Fig. 1, as $\tilde{J}_{\textrm{base}}$ and $\tilde{J}_{\textrm{ind}}$ respectively. 
We can see that there is an imprint of the region in the case of $\tilde{J}_{\textrm{ind}}$.\par
Finally, we combined $\tilde{J}_1$ with $\tilde{J}_{\textrm{base}}$ and $\tilde{J}_{\textrm{ind}}$ respectively using the min-norm controller in (5). The trajectories of the resulting controllers are shown in Fig. 2.
\begin{figure}[]
	\centering
	\begin{minipage}[c]{0.3\linewidth}
		\centering
		\includegraphics[width=\linewidth]{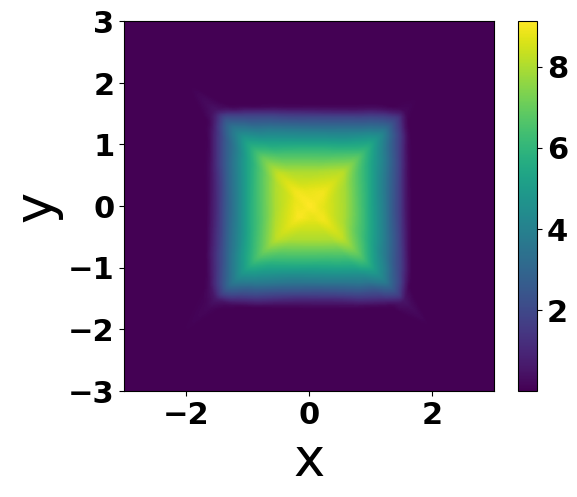}
		(a)
	\end{minipage}
	\begin{minipage}[c]{0.3\linewidth}
		\centering
		\includegraphics[width=\linewidth]{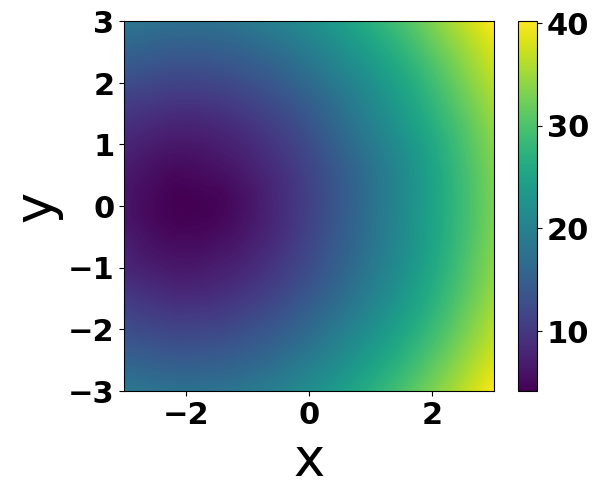}
		(b)
	\end{minipage}
	\begin{minipage}[c]{0.3\linewidth}
		\centering
		\includegraphics[width=\linewidth]{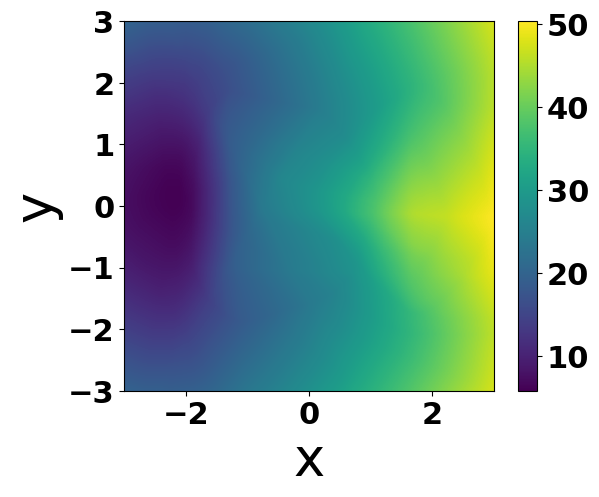}
		(c)
	\end{minipage}%
	\caption{(a): Heatmap of function, $\tilde{J}_1$, trained for avoidance task.
		 (b): Heatmap of function, $\tilde{J}_{\textrm{base}}$, trained using CFVI.
		 (c): Heatmap of function, $\tilde{J}_{\textrm{ind}}$, trained using proposed method to be independent to avoidance task.
		 Note that gradients of $\tilde{J}_{\textrm{ind}}$ appear to be linearly independent to those of $\tilde{J}_1$ as values in (c) appear to warp around the square region.
		 }
	\Description{
		Three plots containing heatmaps of the learned value functions discussed in this subsection. The first heat map is for the avoidance task. The second heatmap is for the go-to-point task
		trained using CFVI and the third heat map is the go-to-point task trained to be independent to the avoidance task using our proposed method. The values of the third heat map appear to 
		warp around the region that the robot must avoid.}

\end{figure}
We can see that since the gradients of $\tilde{J}_1$ and $\tilde{J}_{\textrm{base}}$ go in opposing directions, the agent is not able to move to the goal point from certain states. Meanwhile, our proposed approach influenced the gradients of
$\tilde{J}_{\textrm{ind}}$ and $\tilde{J}_1$ to be linearly independent, such that when they are combined with the min-norm controller in (5), the resulting controller moves the robot to the goal point while successfully avoiding the region.\par
This illustrative example demonstrates how our approach can very visibly influence the gradients of a learned cost-to-go function to be independent to other cost-to-go functions, resulting in tasks that can be executed concurrently.
\begin{figure}[]
	\centering
	\begin{minipage}[c]{0.45\linewidth}
		\centering
		\includegraphics[width=\linewidth]{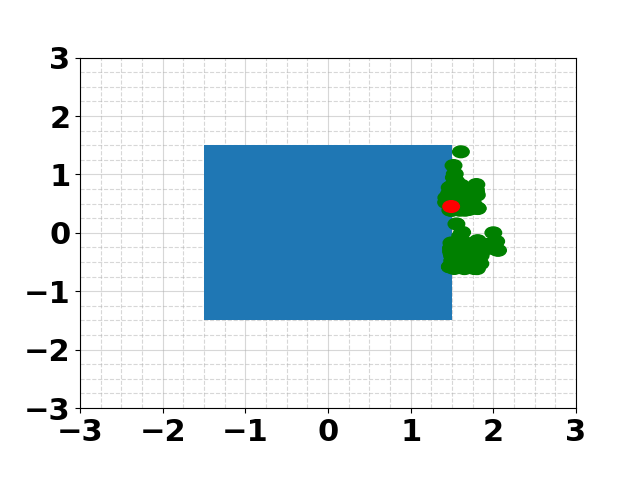}
		\footnotesize{(a) Combining $\tilde{J}_1$ with $\tilde{J}_{\textrm{base}}$}%
	\end{minipage}
	\begin{minipage}[c]{0.45\linewidth}
		\centering
		\includegraphics[width=\linewidth]{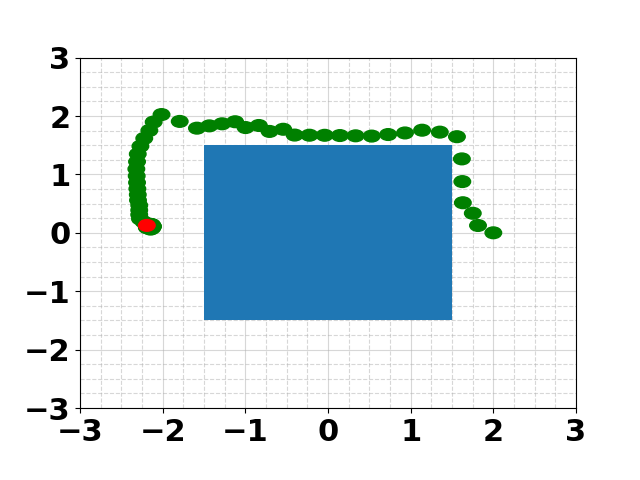}
		\footnotesize{(b) Combining $\tilde{J}_1$ with $\tilde{J}_{\textrm{ind}}$}%
	\end{minipage}%
	\caption{
		(a): Trajectory when combining avoidance task, $\tilde{J}_1$, with baseline go-to-point task, $\tilde{J}_{\textrm{base}}$.
		(b): Trajectory when combining $\tilde{J}_1$ with go-to-point task trained using proposed approach, $\tilde{J}_{\textrm{ind}}$.
		Green: Positions at each time step.
		Red: Final position for each trajectory.
		Blue: Region to avoid.
		}
	\Description{
		Trajectories of both go-to-point tasks combined with the avoidance task. The first trajectory, consisting of the baseline go-to-point task and the avoidance task has the robot getting
		stuck as the two tasks have conflicting objectives. The trajectory that uses the go-to-point method trained with our method actually reaches the goal.
		}
\end{figure}

\subsection{Form Shape and Avoid Square Region}
In this next example, we train a team of three mobile robots to avoid a square-shaped region in the centre of the environment and form a triangle simultaneously.
The state space, $\mathcal{X} = \mathbb{R}^6$, is the position of all robots where 
$\forall x \in \mathcal{X}, x = {\begin{bmatrix} p_1 & p_2 & p_3 \end{bmatrix}}^{\top}$
and $p_1, p_2, p_3 \in \mathbb{R}^2$ each represent the positions of robots $1$, $2$ and $3$ respectively. 
The input space $\mathcal{U} = \mathbb{R}^6$ is the velocities of all robots where $\forall u \in \mathcal{U}, u = {\begin{bmatrix} v_1 & v_2 & v_3 \end{bmatrix}}^{\top}$ and $v_1, v_2, v_3 \in \mathbb{R}^2$ each represent the 
velocities of robots $1$, $2$ and $3$ respectively.\par
We first train the task of avoiding the region using an instantaneous cost of $q_1(x) + {\lVert u \rVert}^2$. 
$q_1(x) = a_1(x) + a_2(x) + a_3(x)$ where $a_i(x) = 35$ if robot $i$ is inside the $1$x$1$ region and $a_i(x) = 0$ otherwise.
We train this task as a single-robot task and covert it to a multi-robot task as described in Remark 3. 
We will denote the approximated cost functional for this first task as $\tilde{J}_{1}$. 

\begin{remark}
Consider a system comprising of $N$ robots where each robot has a state space $ \bar{\mathcal{X}} = \mathcal{R}^n$. 
The state space for the whole system is $\mathcal{X} = \mathcal{R}^{Nn}$. 
Consider a learned cost-to-go function of the form $\tilde{j}(\bar{x})$ where $\bar{x} \in \bar{\mathcal{X}}$. 
This can be turned into a multi-robot task by defining $\tilde{J}(x) = \sum_{i \in T} \tilde{j}(\bar{x}_i)$, for $x \in \mathcal{X}$ and $x_i \in \bar{\mathcal{X}}$, 
where $T$ is the set of robots to assign the single-robot task, $\tilde{j}$, to and $\bar{x}_i$ is the subset of state variables in $x$ used to represent the state of robot $i$.
When calculating $L_f \tilde{J}$ and $L_g \tilde{J}$ using back-propagation, every robot not assigned task $\tilde{j}$ will simply have gradients of zero.
This is similar to what is described in Remark $5$ in \cite{gennaroDars}.
\end{remark}\par
We then train the triangle formation task.
We train two trials using an instantaneous cost of $q_2(x) + {\lVert u \rVert}^2 $ using CFVI 
and two trials using an instantaneous cost of $q_2(x) + {\lVert u \rVert}^2 +  (L_g \tilde{J}_{1} (x) u )^2 \lambda$ to test our proposed method.
We set $\lambda = 50000$.
$q_2(x)$ is equal to:
$$
	(| \lVert p_1 - p_2 \rVert - 0.75 | + | \lVert p_1 - p_3 \rVert - 0.75 | + | \lVert p_2 - p_3 \rVert - 0.75 |) \cdot 1.5.
$$
$q_2(x) = 0$ when the robots form an equilateral triangle with each side of length $0.75$.\par
After training all versions of the formation task, we combine each version with $\tilde{J}_{1}$ and test on the same $50$ initial states. 
The initial states were randomly generated from a uniform distribution of a $2$x$2$ area encapsulating the region that $\tilde{J}_1$ was trained to avoid.
We considered the team of robots to have sucessfully completed both the avoidance and formation tasks if $q_1(x) = 0$ and $q_2(x) < 0.4$. 

\begin{table}[t]
	\caption{Success Rates of Avoiding Region and Forming a Triangle}
	\label{tab:locations}
	\begin{tabular}{cc}\toprule
		\textit{Training Method for Formation Task} & \textit{Success Rate} \\
		\midrule
		CFVI - Trial 1 & 0.4 \\
		CFVI - Trial 2 & 0.34 \\
		Ours - Trial 1 & 0.74 \\
		Ours - Trial 2 & 0.92 \\
	\end{tabular}
\end{table}

The results of this experiment are shown in Table 1. We can see that our proposed method allowed the team of robots to successfully avoid the region and form a triangle simultaneously much more frequently than when we simply
trained the formation task using CFVI. 

\begin{figure}
\includegraphics[width=0.6\linewidth]{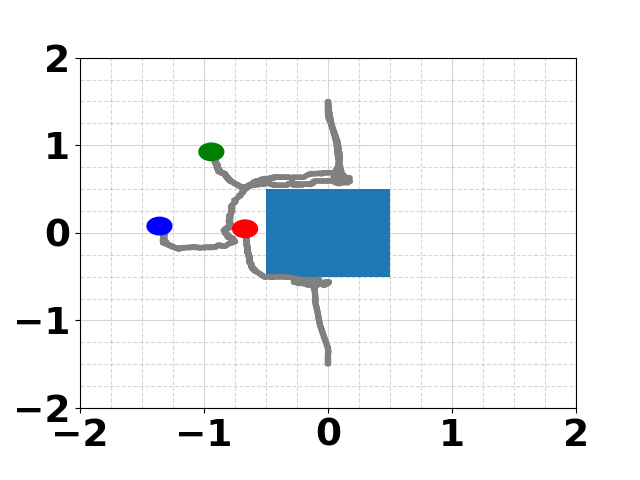}
\caption{Robot team forming triangle while avoiding region. Coloured Dots: Robots 1-3. Lighter Blue: Region to avoid. Grey: Trajectories of each robot.}
\Description{Instance of team of robots forming a triangle while avoiding the square region in the centre of the environment.}
\end{figure}

\subsection{Collaborative Transportation while Avoiding Square Regions}
We extend the previous example to train a team of three mobile robots to perform the following three tasks simultaneously: avoid three square-shaped regions in the environment, send one robot to a specified point and form the shape of a triangle. Like in the previous example, the state space, $\mathcal{X} = \mathbb{R}^6$, is the position of all robots where $\forall x \in \mathcal{X}, x = {\begin{bmatrix} p_1 & p_2 & p_3 \end{bmatrix}}^{\top}$
and $p_1, p_2, p_3 \in \mathbb{R}^2$ each represent the positions of robots $1$, $2$ and $3$ respectively. 
The input space $\mathcal{U} = \mathbb{R}^6$ is the velocities of all robots where $\forall u \in \mathcal{U}, u = {\begin{bmatrix} v_1 & v_2 & v_3 \end{bmatrix}}^{\top}$ and $v_1, v_2, v_3 \in \mathbb{R}^2$
each represent the velocities of robots $1$, $2$ and $3$ respectively. \par
We first train the task of avoiding three obstacles placed in the environment. We use an instantaneous cost function of
$q_1(x) + {\lVert u \rVert}^2$ where $q_1(x) = b_1(x) + b_2(x) + b_3(x)$ where $b_i(x) = 25$ if robot $i$ is inside any of the regions and $b_i(x) = 0$ otherwise. We will denote the learned cost-to-go function for this task as $\tilde{J}_{1}$.\par
We then train the triangle formation task. 
Similar to before, we use an instantaneous cost function of $q_2(x) + {\lVert u \rVert}^2$ for the baseline trials where we use CFVI 
and we use an instantaneous cost function of $q_2(x) + {\lVert u \rVert}^2 +  (L_g \tilde{J}_{1} (x) u )^2 \lambda$ to test our proposed method.
We set $\lambda = 50000$.
$q_2(x)$ is as defined previously in 5.3 where $q_2(x) = 0$ when the robots have formed an equilateral triangle of length 0.75. We train three baseline trials and three trials testing our proposed method.\par
Next, we train the single-robot task of going to the origin. We use an instantaneous cost of $q_3(\bar{x}) + {\lVert \bar{u} \rvert}^2$ for the  baseline task trained with CFVI and
an instantaneous cost of $q_3(\bar{x}) + {\lVert \bar{u} \rVert}^2 + ( L_g \tilde{J}_1 (\bar{x}) \bar{u} )^2$ for the task trained with our proposed method where $\bar{x} \in \mathcal{R}^2$ is the state for a single robot, $\bar{u} \in \mathbb{R}^2$ is the input velocity for a single robot and $q_3(\bar{x})$ is the
distance that robot is away from the origin multiplied by $12$. This single-robot task is turned into a multi-robot task in the way described in Remark $3$. Note that we did not train the go-to-point task to explicitly be independent to the formation task as it should already be independent by construction.\par
\begin{table}[t]
	\caption{Success Rates of Avoiding Region, Going to Origin and Forming a Triangle}
	\label{tab:locations}
	\begin{tabular}{ p{0.4\linewidth} | p{0.2\linewidth} | p{0.2\linewidth} }\toprule
		\textit{Training Method for Formation Task} & \textit{Training Method for Go-to-Point Task} & \textit{Success Rate} \\
		\midrule
		CFVI - Trial 1 & CFVI & 0.24 \\
		CFVI - Trial 1 & Ours & 0.18 \\

		CFVI - Trial 2 & CFVI & 0.16 \\
		CFVI - Trial 2 & Ours & 0.78 \\

		CFVI - Trial 3 & CFVI & 0.2 \\
		CFVI - Trial 3 & Ours & 0.7 \\

		Ours - Trial 1 & CFVI & 0.32 \\
		Ours - Trial 1 & Ours & 0.82 \\

		Ours - Trial 2 & CFVI & 0.18 \\
		Ours - Trial 2 & Ours & 0.54 \\

		Ours - Trial 3 & CFVI & 0.32 \\
		Ours - Trial 3 & Ours & 0.72 \\
	\end{tabular}
\end{table}

\begin{figure}[]
	\centering
	\begin{minipage}[c]{0.32\linewidth}
		\includegraphics[width=\linewidth]{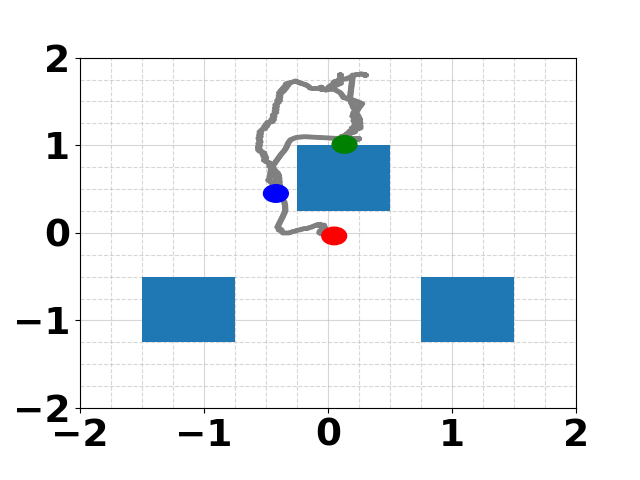}
	\end{minipage}
	\begin{minipage}[c]{0.32\linewidth}
		\includegraphics[width=\linewidth]{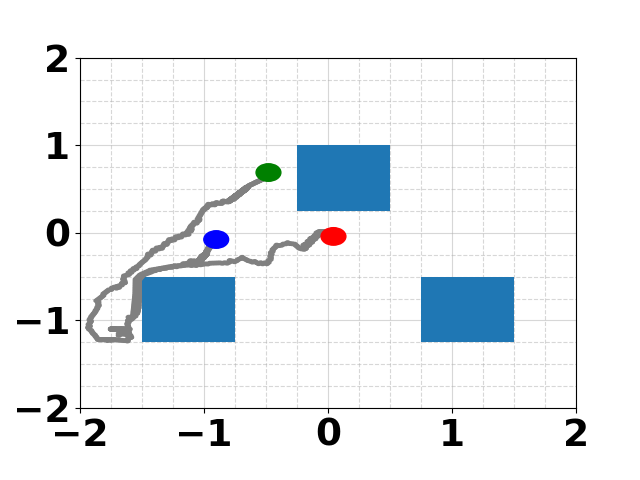}
	\end{minipage}
	\begin{minipage}[c]{0.32\linewidth}
		\includegraphics[width=\linewidth]{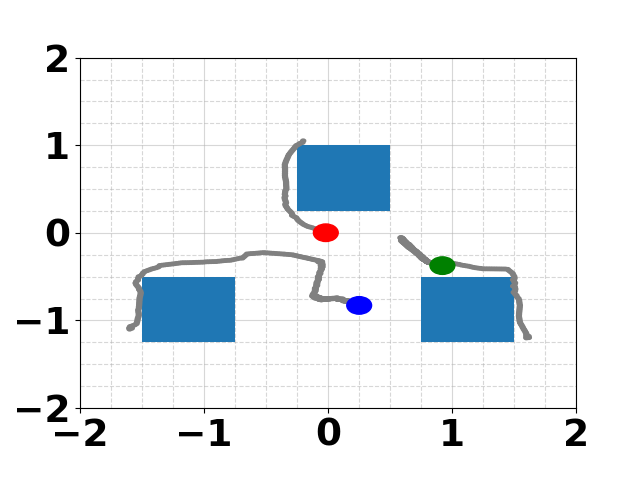}
	\end{minipage}%
	\caption{Robot team successfully forming triangle, sending first robot to origin and avoiding regions.
		 Coloured Dots: Robots 1-3. Lighter Blue: Regions to avoid. Grey: Trajectories of each robot.}
	\Description{Instances of team of robots successfully forming a triangle, sending the first robot to the origin and avoiding the three regions from different initial states.}
\end{figure}

\balance

Finally, we test each combination of trained models on $50$ randomly generated initial states and test the ability of each composed controller to achieve all three tasks. 
To randomly generate the initial states, we selected a position from a uniform distribution from space in the environment not covered by any of the three regions. We then added Gaussian noise to this position, to generate initial states
where each robot was approximately close to one another. This was done to simulate the team of robots transporting objects collaboratively.
We consider the system to have successfully completed all three tasks when $q_1(x) = 0$, when no robot is inside any of the three regions, $q_2(x) < 0.75$ and when $q_3(x) < 1.8$. The avoidance task was given the highest priority and the formation task was given the lowest. \par
The results of this experiment are shown in Table $3$. We can see that the best performing controller is composed of a formation task and go-to-point task trained using our proposed method  and that the worst
performing controller is one where the formation task and go-to-point task were both trained using just CFVI and not to be independent to the avoidance task.
The formation task seemed to benefit less from being trained to be independent to the avoidance task, compared to the previous example in 5.3, and this may be because of how the initial states were generated 
and because of the regions being smaller in this scenario. However, our proposed method still appeared to have made all three tasks more likely to be able to execute together concurrently.
\par
\subsubsection{Time-Varying Task Stacks}
This example also demonstrates the benefit of our overall approach to multi-objective RL. If we want to train our system to go to a different point or perform some other task while still avoiding the regions
and forming a triangle, we can simply train a new third task that is independent to these two tasks. 

\subsection{Training Implementation Details}
Note that the large values of $\lambda$ used in these simulations are not necessary and practitioners should be able to recreate similar results
using much smaller values of $\lambda$. Futhermore, values of $\lambda$ that are too large may lead to unstable training using fitted value iteration and other deep RL methods.
In some instances, practitioners may find it useful to limit the term that contains $\lambda$ in the instantaneous cost. To help further guide practitioners, we have published code here: \url{https://github.com/erablab/Value_Iteration_for_Learning_Concurrently_Executable_Robotic_Control_Tasks}.

\subsection{Experiment with Physical Robots}
We tested the best-performing controller from section 5.3 on physical robots with a physical square-shaped obstacle, the results of which can be viewed here: \url{https://youtu.be/8FSjqIr-pgY}. We used a Vicon motion capture system to track the robots for feedback and sent velocities using the API for the DJI RoboMaster EP Core robots. The dimensions of the physical region to avoid were adjusted to account for the size of the robots. 


\section{Conclusion}
In this paper, we present an approach for training multiple robotic control tasks in such a way that they can be combined and executed together in possibly time-varying prioritized stacks.
We start by defining a notion of independence between learned control tasks which characterize the abililty of tasks to be concurrently executed together as part of constraints in the same min-norm optimization-based controller.
We later propose a cost functional which can be fitted by RL algorithms such that the resulting learned tasks satisfy our definition of independence.
We also present a variant of the fitted value iteration algorithm that can be used to fit our cost functional, extending previous work related to value iteration in the continuous control setting.
Finally, we demonstrate our approach on several scenarios involving mobile robots, showing how our approach can be used to train tasks, that may have otherwise had conflicting objectives, to be concurrently executable.
  


\bibliographystyle{ACM-Reference-Format} 
\bibliography{root}


\end{document}